\documentclass[letterpaper]{article} 
\usepackage{aaai25}  
\usepackage{times}  
\usepackage{helvet}  
\usepackage{courier}  
\usepackage[hyphens]{url}  
\usepackage{graphicx} 
\urlstyle{rm} 
\usepackage{natbib}  
\usepackage{caption} 
\frenchspacing  
\setlength{\pdfpagewidth}{8.5in}  
\setlength{\pdfpageheight}{11in}  
%
\usepackage{algorithm}
\usepackage{algorithmic}

\usepackage{xspace} 
\newcommand{\methodName}{DOMBA\xspace}
%
\usepackage{newfloat}
\usepackage{listings}
\DeclareCaptionStyle{ruled}{labelfont=normalfont,labelsep=colon,strut=off} 
\lstset{%
	basicstyle={\footnotesize\ttfamily},
	numbers=left,numberstyle=\footnotesize,xleftmargin=2em,
	aboveskip=0pt,belowskip=0pt,%
	showstringspaces=false,tabsize=2,breaklines=true}
\floatstyle{ruled}
\newfloat{listing}{tb}{lst}{}
\floatname{listing}{Listing}
%
\pdfinfo{
/TemplateVersion (2025.1)
}

\setcounter{secnumdepth}{0} 

%



\title{\methodName: Double Model Balancing for Access-Controlled\\Language Models via Minimum-Bounded Aggregation}
\author {
    Tom Segal,
    Asaf Shabtai,
    Yuval Elovici
}
\affiliations {
    Ben-Gurion University of the Negev\\
    tomsega@post.bgu.ac.il, shabtaia@bgu.ac.il, elovici@bgu.ac.il
}

\usepackage{amsmath, amsthm, amsfonts}

\newtheorem{definition}{Definition}
\newtheorem{lemma}{Lemma}
\newtheorem{theorem}{Theorem}
\newtheorem{corollary}{Corollary}

\begin{document}

\maketitle

\begin{abstract}
The utility of large language models (LLMs) depends heavily on the quality and quantity of their training data. 
Many organizations possess large data corpora that could be leveraged to train or fine-tune LLMs tailored to their specific needs. 
However, these datasets often come with access restrictions that are based on user privileges and enforced by access control mechanisms. Training LLMs on such datasets could result in exposure of sensitive information to unauthorized users.
A straightforward approach for preventing such exposure is to train a separate model for each access level. 
This, however, may result in low utility models due to the limited amount of training data per model compared to the amount in the entire organizational corpus. 
Another approach is to train a single LLM on all the data while limiting the exposure of unauthorized information. 
However, current exposure-limiting methods for LLMs are ineffective for access-controlled data, where sensitive information appears frequently across many training examples.
We propose \methodName -- double model balancing -- a simple approach for training and deploying LLMs that provides high utility and access-control functionality with security guarantees. 
\methodName aggregates the probability distributions of two models, each trained on documents with (potentially many) different access levels, using a ``min-bounded" average function (a function that is bounded by the smaller value, e.g., harmonic mean). 
A detailed mathematical analysis and extensive evaluation show that \methodName safeguards restricted information while offering utility comparable to non-secure models.
\end{abstract}

\begin{links}
\link{Code and datasets}{https://github.com/ppo1/DOMBA}
\end{links}

\section{Introduction}
Organizations can benefit greatly from training dedicated LLMs, such as coding assistants, email writers or question-answering models, on their data~\cite{tiwari2023information}. 
While the benefits can be substantial, such data often contains restricted information, and an access-control mechanism ensuring users can only access information according to their access rights is usually in place. 
However, LLMs inherently lack such access-control mechanisms, which can lead to the exposure of sensitive information to unauthorized users~\cite{274574,kandpal2024user,9152761}.
A basic approach for introducing access control to LLMs is to train a separate LLM for each access level~\cite{tiwari2023information}. 
However, as our experiments show, this approach can substantially reduce model utility, since the amount of data for each access level is limited. 
For example, training a model on emails from only one department in an organization may be insufficient for constructing effective organizational emails. To overcome this limitation, sufficient data (including restricted data) must be included in the model's training set.
This means that any secure and high utility method should limit the exposure of the training data to users of the model (according to their access rights).

\begin{figure*}[t]
\centering
\includegraphics[width=0.9\textwidth]{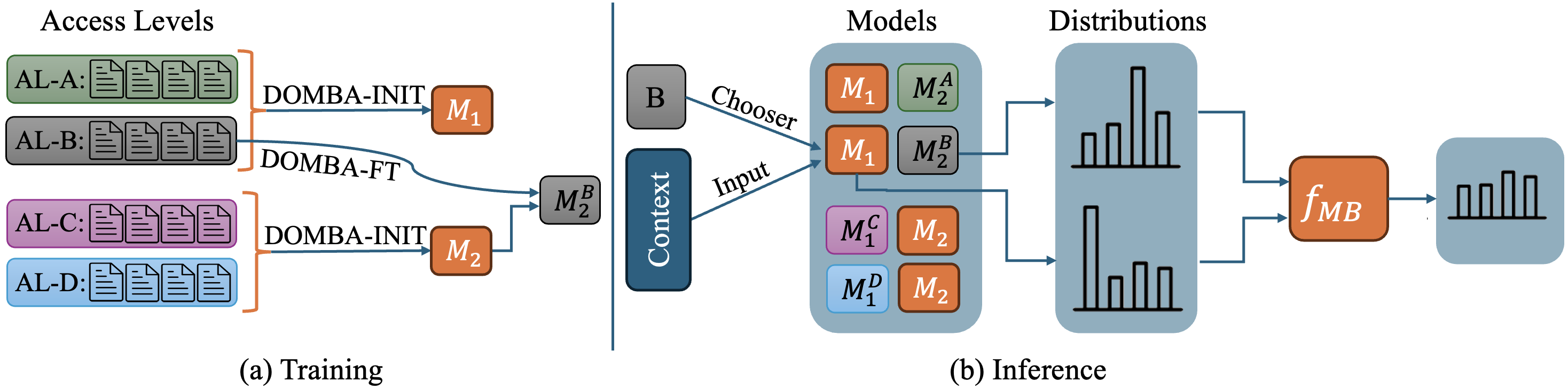} 
\caption{Two main phases of our proposed method. 
(a) Training: Documents are grouped and divided into two partitions according to their access levels (each access level is depicted in a different color). Training includes two steps:
DOMBA-INIT: A submodel is trained on each partition, resulting in $M_1$ and $M_2$. 
DOMBA-FT: To achieve a model dedicated to access level B (AL-B), $M_2$ is fine-tuned on AL-B documents, resulting in $M_2^B$ (note that AL-B documents are included in the training sets of both $M_1$ and $M_2^B$). 
(b) Inference: Given a context and access level, the corresponding submodels are selected, and their probability distributions are aggregated using a ``min-bounded" aggregation function $f_{MB}$.}
\label{fig1}
\end{figure*}

In this paper, we propose \methodName, a method for training and deploying LLMs that incorporates an access-control mechanism while maintaining high utility. 
An overview of \methodName is presented in Figure~\ref{fig1}.
To protect sensitive information, \methodName ``balances" two submodels (trained on two different data partitions, each including different access levels) during inference, using a min-bounded average function; intuitively, each submodel ``knows" different restricted information. 
During text generation, the min-bounded function makes it unlikely for information known to just one submodel to be generated. Assuming that restricted information is not shared between the two partitions, this ensures that no restricted information is likely to be generated. 
It is crucial to separate access levels into partitions in a way that groups access levels with shared sensitive information into the same partition. If we represent access levels as nodes in a graph and connect nodes with edges whenever they share sensitive information, such separation is achievable if the graph is disconnected (and, for practical purposes, does not contain an overly large connected component).

To analyze the privacy protection provided by \methodName, we formalize the notion of ``exposure of secrets" by developing a framework called ''token exposure." 
Token exposure is based on the change in probability of a token (relative to other tokens) between two language models. 
The greater the change in a token's probability (relative to other tokens) when using one model over the other, the greater the exposure of that token. 
Using our approach, the exposure of \methodName over both submodels is bounded by the best possible value (i.e., replacing \methodName by any other model would not improve the bound). 
This means that DOMBA limits the exposure of ``secrets" that appear in just one partition (since one of the submodels does not ``know" them).

Evaluating sensitive information exposure is challenging due to the fact that defining what constitutes sensitive information is a complex and context-dependent task, varying based on organizational policy and other factors~\cite{10.1145/3531146.3534642}. 
Measuring average-case sensitive information exposure is inadequate, since an adversary could devise a prompt that causes the model to substantially deviate from average behavior~\cite{wolf2024fundamental}. 
To address these challenges, we introduce three new security metrics for evaluating sensitive information exposure. 
The first metric assesses worst-case and ``extreme-case" exposure across data records. 
In the second metric, we evaluate the probabilities assigned to certain tokens that should not be exposed by a secure model. 
The third new metric involves a ``secret inference attack" (SIA) that is based on membership inference attack (MIA) techniques. 
In addition to evaluating \methodName on the three new metrics, we also evaluate it using the canary technique~\cite{236216}, in which specific phrases (canaries) are injected into the training data, and the model's inclination to generate them compared to similar phrases not included in the training data is measured. 

The contributions of this work are as follows:
\begin{enumerate}
    \item Providing the first practical and comprehensive solution for access control in LLMs, a solution that provides high utility and security by safely utilizing restricted data.
    \item Developing the mathematical framework of token exposure to analyse language models' exposure of sensitive information and establishing bounds for \methodName.
    \item Creating three novel empirical evaluation metrics for assessing sensitive information exposure and employing these metrics in our evaluation of \methodName.
\end{enumerate}

\section{Related Work}
Very few studies have addressed the use of LLMs in the access-control scenario: \citeauthor{tiwari2023information} \shortcite{tiwari2023information} proposed using mixture of experts (MoE) in conjunction with training a separate model for each access level in order to support users with multiple access rights. However, this approach relies solely on non-restricted documents, which substantially reduces its utility. \citeauthor{wutschitz2023rethinkingprivacymachinelearning} \shortcite{wutschitz2023rethinkingprivacymachinelearning} proposed using retrieval-augmented generation (RAG) with access rights, which prevents the retrieval of unauthorized documents. As illustrated by \citeauthor{tiwari2023information} \shortcite{tiwari2023information}, using RAG by itself, without training the model on the data, may be insufficient to achieve a substantial adjustment of LLM behaviors, such as altering writing styles or tones and incorporating new domain-specific knowledge and terminology. We consider RAG to be complementary to DOMBA, with the choice between them, or a combination of both, being highly dependent on the specific task and dataset. Determining the optimal approach for each case is beyond the scope of this work.

Several studies have explored the use of differential private training algorithms for deep learning~\cite{yu2022differentially,10.1145/2976749.2978318}. 
These algorithms are designed to provide privacy guarantees when each ``secret" appears in only one or a few data records. 
As demonstrated by \citeauthor{ginart2022submix} \shortcite{ginart2022submix}, scaling these algorithms to safeguard a partition of sensitive documents on the same topic (which is critical in the access control scenario) results in impractical utility.
Differential privacy (DP) federated learning~\cite{geyer2018differentially} is a specific application of DP in deep learning in which contributions from different clients are aggregated during training, and noise is added to maintain DP. 
While federated learning might seem promising for the access-control scenario (with each access level treated as a client), \citeauthor{9069945} \shortcite{9069945} demonstrated that performance drops substantially when using a small amount of clients (access levels). 
In contrast, \methodName inherently does not depend on the number of access levels. 

\citeauthor{PATE} \shortcite{PATE} proposed PATE, a differential private framework for training machine learning models for classification tasks. 
PATE is not suitable when there is a large amount of classes, and it requires an unlabeled non-private dataset. Given that all next-token prediction datasets are naturally labeled, and the number of tokens (which could be seen as classification classes) is very large for modern LLMs, PATE is not applicable in our case. 

\citeauthor{ginart2022submix} \shortcite{ginart2022submix} introduced SUBMIX, an inference-time partition-level differential private model ensemble, however their ensemble requires many models to provide meaningful privacy guarantees, resulting in both costly text generation and a degradation in utility. 

Beyond DP-based approaches, various techniques have been proposed to empirically mitigate the exposure of sensitive information. Such approaches include sanitization \cite{lison-etal-2021-anonymisation}, prompt engineering~\cite{chen2023language}, reinforcement learning from human feedback (RLHF)~\cite{NEURIPS2022_b1efde53} and using a privacy regularized loss function \cite{mireshghallah-etal-2021-privacy}.
However, these techniques lack theoretical guarantees, and some have been demonstrated to be vulnerable to attacks such as data extraction and ``jailbreaking" ~\cite{liu2024jailbreaking,chen2023language,nasr2023scalable,wolf2024fundamental}.




\section{Methodology}
In this section, we define the concept of exposing a secret and describe \methodName's training and aggregation processes. 
Using formal mathematical language, we establish a bound to \methodName's exposure of secrets. 
We also show that no other aggregation method could ever achieve a better bound.

\subsection{Training}
DOMBA-INIT: Let $d_1,...,d_k$ be the datasets corresponding to access levels $1,...,k$. 
We randomly assign each access level to one of two data partitions and train a submodel on each partition separately, denoted as $M_1$ and $M_2$. \\
DOMBA-FT: For each access level $AL$, let $M_1$ and $M_2$ be the resulting submodels of DOMBA-INIT. 
If $AL$ was assigned to $M_1$ during DOMBA-INIT, we fine-tune $M_2$ on $d_{AL}$. 
Otherwise, we fine-tune $M_1$ on $d_{AL}$. We then save the states of $M_1$ and $M_2$, which will be used during inference for users with access level $AL$.
If a user has multiple access rights, an MoE can be used as demonstrated by \citeauthor{tiwari2023information} \shortcite{tiwari2023information} (this scenario is not explored in this study).
We note that PEFT (parameter-efficient fine-tuning) methods such as LORA \cite{hu2021lora} can be used to efficiently train and store the different states of $M_1$ and $M_2$.

\subsection{Preliminaries}
Let $\Sigma$ be a set of tokens, and let $n=|\Sigma|$. 
We use $t$ to refer to a token and $c$ to refer to a context (i.e., a sequence of tokens preceding t). We use $M$, $M_1$, $M_2$ to refer to next-token prediction language models and denote the probability assigned by $M$ to token $t$ given context $c$ as $p_M(t|c)$ . 
We use $\sum$ (sum) without specification to indicate summation over all tokens (i.e., $\sum_{t \in \Sigma}$).
We note that the theorems in the subsections that follow commonly refer to arbitrary language models $M_1$ and $M_2$, but it may be helpful to think of $M_1$ and $M_2$ as the outputs of DOMBA-INIT or DOMBA-FT.

\subsection{Token Exposure}
We begin by defining exposing a secret in a formal sense. As highlighted by \citeauthor{10.1145/3531146.3534642} \shortcite{10.1145/3531146.3534642}, secrecy is relative -- something is deemed secret if it is known by some but unknown to others. 
Therefore, our concept of secrecy involves comparing probabilities assigned to a token by two models. One possible approach is to use the ratio of the probabilities assigned by the models to assess secrecy. However, this method has drawbacks. Consider the following probability distributions over the tokens $a,b,c,d$: $p_1=(0.7,0.1,0.1,0.1)$ and $p_2=(0.97,0.01,0.01,0.01)$. The probabilities' ratios ($p_1/p_2$) are $(0.72, 10, 10, 10)$. This implies that tokens $b,c$, and $d$ are ``secret" in $p_1$ compared to $p_2$. However, it seems more appropriate to consider $a$ as secret in $p_2$ compared to $p_1$, because $p_2$ assigns $a$ a probability that is 97 higher than all other tokens, whereas $p_1$ assigns it a probability that is only seven times higher. To address this, we compare the probability ratio of a token $t$ (between two models) to a ``typical probability ratio" (TPR).

\begin{definition} [Geometric mean]
Let $f: \Sigma \rightarrow \mathbb{R^{+}}$. The geometric mean of $f$ is $GM(f(t)) := \exp(\frac{1}{n}\sum \log(f(t)))$.
\end{definition}

\begin{definition} [TPR]
Let $c$ be a context, and let $M_1,M_2$ be language models. We define the ``TPR at $c$" of $M_1, M_2$ as $tpr_c(M_1, M_2)=GM(\frac{p_{M_1}(t|c)}{p_{M_2}(t|c)})$.
\end{definition}

\begin{definition} [Token exposure]
Let $c$ be a context, $t$ be a token, and $M_1, M_2$ be language models. We call $t$ ``$\alpha$-exposed by $M_1$ over $M_2$ at $c$" if $\frac{p_{M_1}(t|c)}{p_{M_2}(t|c) \cdot tpr_c(M_1, M_2)} = \alpha$. We also say that $t$ is ``$\leq$$ \alpha$-exposed" if $t$ is $\beta$-exposed for some $\beta \leq \alpha$.
\end{definition}

In other words, instead of directly dividing $p_{M_1}(t|c)$ by $p_{M_2}(t|c)$, we adjust $p_{M_2}(t|c)$ by multiplying it by the TPR. In the example discussed, the TPR is 5.18, which results in the following exposures of tokens $a,b,c,d$: $M_1$ over $M_2$: (0.14, 1.93, 1.93, 1.93) and $M_2$ over $M_1$: (7.19, 0.52, 0.52, 0.52). These values better reflect our intuition that $a$ is secret and not $b,c$, and / or $d$.

\subsection{Token Exposure Properties}
In this subsection, we explore certain properties of token exposure that are essential for later discussions.
\begin{definition} [Typical and relative probability]
Let $c$ be a context, and let $M$ be a language model. We define the ``typical probability at $c$" of $M$ as $tp_c(M)=GM(p_{M}(t|c))$.\\
Let $t$ be a token, we further define the ``relative probability of $t$ at $c$ by $M$" as $rp_M(t|c) := \frac{p_M(t|c)}{tp_c(M)}$.
\end{definition}

\begin{lemma}
$tpr_c(M_1, M_2) = \frac{tp_c(M_1)}{tp_c(M_2)}$.
\label{tpr_rtp}
\end{lemma}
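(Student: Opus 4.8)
The plan is to unfold both sides to their definitions in terms of the geometric mean and then exploit the fact that $\log$ converts the quotient appearing inside $tpr_c$ into a difference, which the ``average-then-exponentiate'' operation of the geometric mean respects. Concretely, I would start from the definition $tpr_c(M_1,M_2)=GM\!\left(p_{M_1}(t|c)/p_{M_2}(t|c)\right)=\exp\!\left(\frac{1}{n}\sum \log\!\left(p_{M_1}(t|c)/p_{M_2}(t|c)\right)\right)$ and aim to factor the exponent into a part depending only on $M_1$ and a part depending only on $M_2$.

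The key step is the identity $\log\!\left(p_{M_1}(t|c)/p_{M_2}(t|c)\right)=\log p_{M_1}(t|c)-\log p_{M_2}(t|c)$, which by linearity of the summation splits the exponent as $\frac{1}{n}\sum \log p_{M_1}(t|c)-\frac{1}{n}\sum \log p_{M_2}(t|c)$. Since $\exp$ sends a difference to a quotient, the whole expression factors as $\exp\!\left(\frac{1}{n}\sum \log p_{M_1}(t|c)\right)\big/\exp\!\left(\frac{1}{n}\sum \log p_{M_2}(t|c)\right)$, and each factor is, by the Definition of typical probability, exactly $tp_c(M_1)$ and $tp_c(M_2)$ respectively. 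This immediately yields $tpr_c(M_1,M_2)=tp_c(M_1)/tp_c(M_2)$, as claimed.

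I do not expect any genuine obstacle here: the lemma is essentially the observation that the map $f\mapsto \log GM(f)=\frac{1}{n}\sum\log f$ is additive, i.e.\ a homomorphism from pointwise multiplication of positive functions into addition of reals, and the statement is simply this homomorphism applied to the ratio $p_{M_1}/p_{M_2}$. The only point I would flag for care is that every logarithm involved must be well defined; this is guaranteed because the geometric mean in the Definition is taken over functions into $\mathbb{R}^{+}$, so I would note at the outset that the token probabilities $p_{M_i}(t|c)$ are assumed strictly positive for all tokens, which is implicit in applying $GM$ to them in the first place.
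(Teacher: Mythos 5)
Your proposal is correct and matches the paper's own proof exactly: both unfold $tpr_c$ via its definition, split $\log(p_{M_1}/p_{M_2})$ into a difference, and use the multiplicativity of $\exp$ to recover $tp_c(M_1)/tp_c(M_2)$. The remark about strict positivity of the token probabilities is a sensible (if implicit in the paper) observation, but the argument is otherwise identical.
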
 

\begin{proof}
$tpr_c(M_1, M_2) = \exp(\frac{1}{n}\sum \log(\frac{p_{M_1}(t|c)}{p_{M_2}(t|c)})) = \frac{\exp(\frac{1}{n}\sum \log(p_{M_1}(t|c)))}{\exp(\frac{1}{n}\sum \log(p_{M_2}(t|c)))} = \frac{tp_c(M_1)}{tp_c(M_2)}$.
\end{proof}

By this lemma, $\alpha$-exposed is equivalent to $\frac{rp_{M_1}(t|c)}{rp_{M_2}(t|c)} = \alpha$.

\begin{lemma} [Token exposure multiplicity]
If $t$ is $\alpha$-exposed by $M_1$ over $M_2$ at c and $\beta$-exposed by $M_2$ over $M_3$ at $c$, then $t$ is $\alpha \beta$-exposed by $M_1$ over $M_3$ at $c$.
\label{exposure_multi}
\end{lemma}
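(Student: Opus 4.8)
The plan is to reduce the statement to a one-line telescoping identity by working with relative probabilities rather than the raw definition. Recall from the remark following Lemma~\ref{tpr_rtp} that ``$t$ is $\alpha$-exposed by $M_i$ over $M_j$ at $c$'' is equivalent to $\frac{rp_{M_i}(t|c)}{rp_{M_j}(t|c)} = \alpha$. This reformulation is the key simplification: it expresses each pairwise exposure as a ratio of per-model quantities $rp_{M_i}(t|c)$ that do not reference the other model, so composing exposures becomes a matter of cancellation.

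First I would restate the two hypotheses in this form: the first gives $\frac{rp_{M_1}(t|c)}{rp_{M_2}(t|c)} = \alpha$ and the second gives $\frac{rp_{M_2}(t|c)}{rp_{M_3}(t|c)} = \beta$. Multiplying these two equations yields $\frac{rp_{M_1}(t|c)}{rp_{M_2}(t|c)} \cdot \frac{rp_{M_2}(t|c)}{rp_{M_3}(t|c)} = \alpha\beta$, and the common factor $rp_{M_2}(t|c)$ telescopes, leaving $\frac{rp_{M_1}(t|c)}{rp_{M_3}(t|c)} = \alpha\beta$. Applying the same characterization in reverse identifies this as exactly the statement that $t$ is $\alpha\beta$-exposed by $M_1$ over $M_3$ at $c$, completing the argument.

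There is essentially no hard step here; the only point requiring care is the cancellation of $rp_{M_2}(t|c)$, which is legitimate precisely because all the probabilities involved are strictly positive (the geometric means in the definitions of $tp_c$ and $rp_M$ are taken over functions into $\mathbb{R}^{+}$), so $rp_{M_2}(t|c) \neq 0$. As an alternative that does not rely on the relative-probability remark, one could argue directly from the definition of $\alpha$-exposure and invoke the multiplicativity $tpr_c(M_1,M_2)\cdot tpr_c(M_2,M_3) = tpr_c(M_1,M_3)$, which itself follows immediately from Lemma~\ref{tpr_rtp} by telescoping the typical probabilities $tp_c(M_2)$; both routes terminate in the same one-line cancellation.
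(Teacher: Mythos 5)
Your proof is correct and is essentially identical to the paper's: both rewrite $\alpha$-exposure as the ratio of relative probabilities $\frac{rp_{M_1}(t|c)}{rp_{M_2}(t|c)} = \alpha$ (via the remark after Lemma~\ref{tpr_rtp}) and then telescope $\frac{rp_{M_1}(t|c)}{rp_{M_3}(t|c)} = \frac{rp_{M_1}(t|c)}{rp_{M_2}(t|c)} \cdot \frac{rp_{M_2}(t|c)}{rp_{M_3}(t|c)} = \alpha\beta$. Your extra remarks on positivity and the alternative route through $tpr_c$ multiplicativity are fine but not needed.
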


\begin{proof}
$\frac{rp_{M_1}(t|c)}{rp_{M_3}(t|c)} = \frac{rp_{M_1}(t|c)}{rp_{M_2}(t|c)} \cdot \frac{rp_{M_2}(t|c)}{rp_{M_3}(t|c)} = \alpha \beta$.
\end{proof}

\subsection{Aggregation}
In this subsection, we provide a formal definition of the notion of a min-bounded function and describe how DOMBA aggregates two submodels.
\begin{definition} [Proper-avg function]
Let $f: \mathbb{R^{+}}^2 \rightarrow \mathbb{R^{+}}$. We call $f$ a proper-avg function if $\forall{x,y}: \min(x,y) \leq f(x,y) \leq \max(x,y)$.
\end{definition}

\begin{definition} [Min-bounded function]
Let $f$ be a proper-avg function. we call $f$ min-bounded if $\forall{x,y}, f(x,y) \leq \lambda_f\min(x,y)$ for some constant $\lambda_f$.
\end{definition}

In practice we use the generalized mean \cite{Sykora2009} with $\alpha < 0$ for min-bounded functions, that is, $f(x,y)=(\frac{1}{2}(x^{\alpha} + y^{\alpha}))^{\frac{1}{\alpha}}, \lambda_f = 2^{-\frac{1}{\alpha}}$. Two special cases are:
\begin{enumerate}
\item $\alpha \rightarrow -\infty$ (Minimum): $f(x,y) = \min(x,y), \lambda_f=1$.
\item $\alpha = -1$ (Harmonic mean): $f(x,y) = \frac{2xy}{x+y}, \lambda_f=2$.
\end{enumerate}

Note that the arithmetic mean ($\frac{x+y}{2}$) is not min-bounded.

\begin{definition} [DOMBA aggregation]
Let $M_1, M_2$ be language models, and let f be a min-bounded function. We define $DAGG_f(M_1, M_2)$ (denoted as $M$) as a model that assigns probabilities as follows: $p_M(t|c) = \frac{M(t|c)}{\sum_{t' \in \Sigma} M(t'|c)}$, where $M(t|c) = f(rp_{M_1}(t|c), rp_{M_2}(t|c))$. 
\end{definition}

We note that DOMBA uses $f$ to average the \textbf{relative probabilities}. In contrast, averaging the probabilities would lead to inferior bounds in the subsequent subsection.

\subsection{Bounding the Exposure of DOMBA}
In this subsection, we establish the bounds on DOMBA's exposure over both submodels (Theorem \ref{bound_theorem}) as well as over any other model (Corollary \ref{corollary1}).
We begin by introducing several definitions and lemmas that will be used for proving the main theorem later on.

\begin{definition}
Let $M_1, M_2$ be language models, and let f be a min-bounded function. Let $M = DAGG_f(M_1, M_2)$. We define $\overline{f_c}(M_1, M_2) = GM(M(t|c)^{-1})$.
\end{definition}

While it might be unclear how to interpret $\overline{f_c}(M_1, M_2)$, it relates to a notion of ``mean exposure" between $M_1$ and $M_2$: 

\begin{definition} [Mean absolute exposure]
Let $c$ be a context and $M_1, M_2$ be language models. We define the ``mean absolute exposure between $M_1$ and $M_2$ at $c$" as $MAE_c(M_1, M_2) = GM(\max(\frac{rp_{M_1}(t|c)}{rp_{M_2}(t|c)}, \frac{rp_{M_2}(t|c)}{rp_{M_1}(t|c)}))$.
\end{definition}

\begin{lemma}
$\overline{f_c}(M_1, M_2) \leq \sqrt{MAE_c(M_1, M_2)}$.
\label{gae_lemma}
\end{lemma}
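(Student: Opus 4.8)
The plan is to reduce the statement to the minimum function and then exploit the single fact that relative probabilities are normalized to have geometric mean one. First I would record two elementary observations. Since $f$ is a proper-avg function, $\min(x,y) \leq f(x,y)$ for all $x,y$; writing $M(t|c) = f(rp_{M_1}(t|c), rp_{M_2}(t|c))$, this gives the pointwise bound $M(t|c)^{-1} \leq \min(rp_{M_1}(t|c), rp_{M_2}(t|c))^{-1}$. Second, because $rp_M(t|c) = p_M(t|c)/tp_c(M)$ with $tp_c(M) = GM(p_M(t|c))$, and since $GM$ is multiplicative and positively homogeneous, I get $GM(rp_{M_1}(t|c)) = GM(rp_{M_2}(t|c)) = 1$ (the same computation already used implicitly in Lemma~\ref{tpr_rtp}).

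Next I would apply $GM$, which is monotone increasing in each coordinate, to the pointwise bound, obtaining $\overline{f_c}(M_1, M_2) = GM(M(t|c)^{-1}) \leq GM(\min(rp_{M_1}(t|c), rp_{M_2}(t|c))^{-1})$. It then remains only to evaluate this last quantity. The crux is the algebraic identity $\min(x,y)^{-1} = (xy)^{-1/2}\,\bigl(\max(x,y)/\min(x,y)\bigr)^{1/2}$, which follows at once from $\min(x,y)\max(x,y) = xy$. Substituting $x = rp_{M_1}(t|c)$ and $y = rp_{M_2}(t|c)$ and using multiplicativity of $GM$, the factor $(xy)^{-1/2}$ contributes $GM(rp_{M_1}(t|c))^{-1/2}\,GM(rp_{M_2}(t|c))^{-1/2} = 1$ by the second observation, while the factor $(\max/\min)^{1/2}$ contributes $GM\bigl(\max(rp_{M_1}(t|c), rp_{M_2}(t|c))/\min(rp_{M_1}(t|c), rp_{M_2}(t|c))\bigr)^{1/2}$. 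Since $\max(a/b, b/a) = \max(a,b)/\min(a,b)$, this last term is exactly $\sqrt{MAE_c(M_1, M_2)}$ by definition, and combining the two factors yields the claim.

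The main obstacle is really just the key idea that makes the proof short: recognizing that the awkward $(xy)^{-1/2}$ factor disappears under the geometric mean \emph{precisely because} relative probabilities are normalized to geometric mean one. This is where the modeling choice to aggregate relative probabilities rather than raw probabilities (as flagged after the definition of $DAGG_f$) earns its keep — had we averaged raw probabilities, this factor would not cancel and the bound would degrade. The only other point needing a word of justification is the monotonicity of $GM$ under the pointwise inequality, which is immediate since $GM$ is an increasing function of each of its arguments.
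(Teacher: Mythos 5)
Your proof is correct and takes essentially the same route as the paper's: both reduce $\overline{f_c}(M_1,M_2)$ to $GM(\min(rp_{M_1}(t|c), rp_{M_2}(t|c))^{-1})$ via the proper-avg lower bound and then exploit the normalization $GM(rp_{M_i}(t|c))=1$ together with $\min\cdot\max = xy$. The paper merely carries out your factorization identity additively in log space --- its observations $x+y=0$ and $y-x=n\log(MAE_c(M_1,M_2))$ are precisely your two factors $(xy)^{-1/2}$ and $(\max/\min)^{1/2}$ after taking logarithms --- so there is no substantive difference and no gap.
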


\begin{proof}
Let $x := \sum \log(\min(rp_{M_1}(t|c), rp_{M_2}(t|c)))$, $y:=\sum \log(\max(rp_{M_1}(t|c), rp_{M_2}(t|c)))$. We observe that $x+y = \sum \log(rp_{M_1}(t|c)) + \sum \log(rp_{M_2}(t|c)) = 0 + 0 = 0$. by definition, $y - x = n \cdot \log(MAE_c(M_1,M_2))$, which implies, $x = -\frac{n}{2}\log(MAE_c(M_1,M_2))$, we conclude that $\overline{f_c}(M_1, M_2) \leq \exp(-\frac{x}{n}) = \sqrt{MAE_c(M_1, M_2)}$.
\end{proof}

\begin{lemma}
$rp_M(t|c) = M(t|c) \cdot \overline{f_c}(M_1, M_2)$.
\label{fc_lemma}
\end{lemma}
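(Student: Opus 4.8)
The plan is to unfold the definition of the relative probability $rp_M(t|c)$ and track how the normalization constant of the aggregated model behaves when it passes through the geometric mean. Writing $S := \sum_{t' \in \Sigma} M(t'|c)$ for the normalizer, the definition of $DAGG_f$ gives $p_M(t|c) = M(t|c)/S$, and by definition $rp_M(t|c) = p_M(t|c)/tp_c(M)$, so everything reduces to computing $tp_c(M) = GM(p_M(t|c))$.

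First I would substitute the normalized form into the geometric mean and exploit the fact that $S$ does not depend on $t$. Since $GM(\cdot) = \exp(\frac{1}{n}\sum \log(\cdot))$ and $\frac{1}{n}\sum \log S = \log S$, the constant factors cleanly out of the geometric mean, giving
\begin{equation*}
tp_c(M) = GM\!\left(\frac{M(t|c)}{S}\right) = \frac{GM(M(t|c))}{S}.
\end{equation*}
Substituting this back, the two copies of $S$ cancel and I am left with $rp_M(t|c) = M(t|c)/GM(M(t|c))$, so the normalization has disappeared entirely.

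It then remains only to identify $1/GM(M(t|c))$ with $\overline{f_c}(M_1, M_2)$. This follows from the elementary reciprocal identity for the geometric mean: $GM(M(t|c)^{-1}) = \exp(-\frac{1}{n}\sum \log M(t|c)) = GM(M(t|c))^{-1}$, which is exactly the definition of $\overline{f_c}(M_1, M_2) = GM(M(t|c)^{-1})$. Combining the two displays yields $rp_M(t|c) = M(t|c)\cdot \overline{f_c}(M_1, M_2)$, as claimed.

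I do not expect a genuine obstacle here; the only point requiring care is the bookkeeping around the normalizer $S$ — specifically recognizing that it factors through $GM$ and cancels, leaving a scale-free identity. The conceptual payoff is that $\overline{f_c}(M_1, M_2)$ is precisely the reciprocal geometric mean of the unnormalized aggregate $M(t|c)$, which is what makes it the natural quantity for converting between the raw aggregate $M(t|c)$ and the relative probability of the aggregated model, and which is why it was worth isolating in the preceding definition.
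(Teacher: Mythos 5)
Your proof is correct and follows essentially the same route as the paper's: unfold $rp_M$, cancel the normalizer $S$ through the geometric mean, and identify $1/GM(M(t|c))$ with $\overline{f_c}(M_1,M_2)$ via the reciprocal identity. The paper's version is just a terser rendering of the same computation.
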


\begin{proof}
$rp_M(t|c) = \frac{p_M(t|c)}{tp_c(M)} = \frac{p_M(t|c)}{\exp(\frac{1}{n}\sum \log(p_{M}(t|c)))} = \frac{M(t|c)}{\exp(\frac{1}{n}\sum \log(M(t|c)))} = M(t|c) \cdot \overline{f_c}(M_1, M_2)$
\end{proof}

In the following theorem, we provide a lower bound to the minimum token exposure achievable over two models.

\begin{theorem}
Let $c$ be a context and $M, M_1, M_2$ be language models. There exists a token $t$ that is $\geq$$\sqrt{MAE_c(M_1, M_2)}$-exposed by $M$ over either $M_1$ or $M_2$.
\label{low_bound_theorem}
\end{theorem}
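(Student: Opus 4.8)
The plan is to take logarithms and reduce the whole statement to a one-line averaging argument. For each token $t$ I would set $a_t = \log rp_{M_1}(t|c)$, $b_t = \log rp_{M_2}(t|c)$, and $m_t = \log rp_M(t|c)$. The structural fact I intend to lean on is that relative probabilities are defined by dividing by the geometric mean $tp_c$, so each family has geometric mean $1$; taking logs gives $\sum_t a_t = \sum_t b_t = \sum_t m_t = 0$. This is the only property of $M$ the proof will ever use.

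Next I would translate the exposure claim into these coordinates. By Lemma \ref{tpr_rtp}, $t$ being $\alpha$-exposed by $M$ over $M_i$ means $m_t - \log rp_{M_i}(t|c) = \log \alpha$. Hence $t$ is $\geq\sqrt{MAE_c(M_1,M_2)}$-exposed by $M$ over $M_1$ or over $M_2$ precisely when $\max(m_t - a_t,\, m_t - b_t) \geq \tfrac12 \log MAE_c(M_1,M_2)$. Since $\max(m_t - a_t,\, m_t - b_t) = m_t - \min(a_t,b_t)$, the theorem reduces to exhibiting a single token $t$ with $m_t - \min(a_t,b_t) \geq \tfrac12 \log MAE_c(M_1,M_2)$.

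The key step is to average the quantity $m_t - \min(a_t,b_t)$ over all $n$ tokens. Here I would reuse the computation inside the proof of Lemma \ref{gae_lemma}: because $\log\min(rp_{M_1},rp_{M_2}) = \min(a_t,b_t)$, and using $\sum_t a_t = \sum_t b_t = 0$ together with $\log MAE_c(M_1,M_2) = \tfrac1n \sum_t |a_t - b_t|$, one gets $\sum_t \min(a_t,b_t) = -\tfrac{n}{2}\log MAE_c(M_1,M_2)$. Combining this with $\sum_t m_t = 0$ yields $\tfrac1n \sum_t \big(m_t - \min(a_t,b_t)\big) = \tfrac12 \log MAE_c(M_1,M_2)$. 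Since the maximum of a finite collection is at least its average, some token $t$ must satisfy $m_t - \min(a_t,b_t) \geq \tfrac12 \log MAE_c(M_1,M_2)$, which by the reformulation above is exactly the asserted exposure over $M_1$ or $M_2$.

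I expect the difficulty to be bookkeeping rather than a genuine obstacle. The two points to handle carefully are that the smaller of $a_t$ and $b_t$ produces the larger exposure (so the "either $M_1$ or $M_2$" disjunction collapses to $m_t - \min(a_t,b_t)$), and that the averaging argument only guarantees a single good token, which is all the theorem claims. It is worth emphasizing that $M$ enters the argument solely through $\sum_t m_t = 0$; consequently the bound is attained for every model $M$ whatsoever, which is precisely why this lower bound certifies that no aggregation scheme can drive exposure below $\sqrt{MAE_c(M_1,M_2)}$.
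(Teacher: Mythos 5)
Your proof is correct and is essentially the paper's own argument: the paper observes that $GM\bigl(rp_M(t|c)/\min(rp_{M_1}(t|c),rp_{M_2}(t|c))\bigr)=\sqrt{MAE_c(M_1,M_2)}$ (reusing the computation $x=-\tfrac{n}{2}\log MAE_c$ from Lemma~\ref{gae_lemma}) and concludes that some token meets or exceeds this average, which is exactly your log-coordinate averaging argument with $\sum_t m_t=0$ as the only property of $M$ used. The only difference is cosmetic (geometric means versus logarithms), so there is nothing further to compare.
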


\begin{proof}
By the proof of lemma \ref{gae_lemma}:
$\sqrt{MAE_c(M_1, M_2)} = \exp(-\frac{x}{n})  = GM(\frac{rp_M(t|c)}{\min(rp_{M_1}(t|c), rp_{M_2}(t|c))})$. The right end side is an average over tokens. Therefore there exists a token for which the term inside is greater than or equal to the left end side, which finishes the proof.
\end{proof}

In the following theorem, we demonstrate that using DOMBA provides a bound on the exposure that is a constant multiple of the best possible bound (Theorem \ref{low_bound_theorem}). This constant is solely dependent on $f$ and can even reach a value of $1$ ($f=Minimum$).

\begin{theorem}
Let $f$ be a min-bounded function, $t$ a token and $M = DAGG_f(M_1,M_2)$. $t$ is $\leq$$\gamma$-exposed over both $M_1$ and $M_2$ for $\gamma = \lambda_f \overline{f_c}(M_1, M_2) \leq \lambda_f \sqrt{MAE_c(M_1, M_2)}$.
\label{bound_theorem}
\end{theorem}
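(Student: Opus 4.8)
The plan is to directly compute the exposure of $M$ over each submodel and reduce it to the quantity $\gamma$ using the two lemmas already established. By the remark following Lemma~\ref{tpr_rtp}, the exposure of $M$ over $M_i$ at $c$ is exactly $\frac{rp_M(t|c)}{rp_{M_i}(t|c)}$, so it suffices to show this ratio is at most $\gamma$ for both $i=1$ and $i=2$ simultaneously.

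First I would rewrite the numerator using Lemma~\ref{fc_lemma}, which gives $rp_M(t|c) = M(t|c)\cdot\overline{f_c}(M_1,M_2)$. Substituting, the exposure of $M$ over $M_i$ becomes $\frac{M(t|c)\,\overline{f_c}(M_1,M_2)}{rp_{M_i}(t|c)}$. The next step is to invoke the min-bounded hypothesis on $f$: since $M(t|c) = f(rp_{M_1}(t|c),rp_{M_2}(t|c)) \leq \lambda_f\min(rp_{M_1}(t|c),rp_{M_2}(t|c)) \leq \lambda_f\,rp_{M_i}(t|c)$ for each $i$, the factor $rp_{M_i}(t|c)$ cancels and the exposure is bounded by $\lambda_f\,\overline{f_c}(M_1,M_2) = \gamma$. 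Because the chain $M(t|c)\le\lambda_f\min(\cdot,\cdot)\le\lambda_f\,rp_{M_i}(t|c)$ holds for both indices at once (the minimum dominates each argument separately), the single constant $\gamma$ controls the exposure over $M_1$ and over $M_2$ together. Finally, the inequality $\gamma = \lambda_f\,\overline{f_c}(M_1,M_2) \leq \lambda_f\sqrt{MAE_c(M_1,M_2)}$ is immediate from Lemma~\ref{gae_lemma}.

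I do not anticipate a genuine obstacle, since all of the technical work has been front-loaded into Lemmas~\ref{gae_lemma} and~\ref{fc_lemma}; what remains is essentially a one-line cancellation. The only point meriting care is the logical structure of the ``$\leq\gamma$-exposed over both'' claim: I would make explicit that $\min(rp_{M_1}(t|c),rp_{M_2}(t|c))$ is at most each $rp_{M_i}(t|c)$ individually, so that the same constant $\gamma$ certifies the bound against $M_1$ and against $M_2$ rather than requiring two separate estimates. This is precisely the feature that makes averaging the \emph{relative} probabilities (as in $DAGG_f$) preferable to averaging the raw probabilities.
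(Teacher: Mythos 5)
Your proposal is correct and follows essentially the same route as the paper's proof: rewrite $rp_M(t|c)$ via Lemma~\ref{fc_lemma}, bound $M(t|c)$ by $\lambda_f\min(rp_{M_1}(t|c),rp_{M_2}(t|c))\leq\lambda_f\,rp_{M_i}(t|c)$ using min-boundedness, cancel, and invoke Lemma~\ref{gae_lemma} for the final inequality. Your explicit remark that the minimum dominates each argument separately (so one constant $\gamma$ covers both submodels) is a point the paper leaves implicit by writing the chain only for $M_1$, but it is the same argument.
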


\begin{proof}
$\frac{rp_M(t|c)}{rp_{M_1}(t|c)} = \frac{M(t|c) \cdot \overline{f_c}(M_1, M_2)}{rp_{M_1}(t|c)} 
\leq \frac{\lambda_f \min(rp_{M_1}(t|c), rp_{M_2}(t|c)) \cdot \overline{f_c}(M_1, M_2)}{rp_{M_1}(t|c)} \leq \lambda_f \overline{f_c}(M_1, M_2)$. 
\end{proof}

We note that by assuming $M_1$ and $M_2$ assign similar relative probabilities to most tokens, we can anticipate the mean absolute exposure to be low. Essentially, we achieve average case behavior for all tokens.

In the following corollary, we informally think of $M_b$ as our base model (although the corollary holds in general).

\begin{corollary}
Let $c$ be a context, and let $M_1, M_2, M_b$ be language models. Let $t$ be a token that is $\alpha$-exposed by $M_1$ over $M_b$ at $c$ and $\beta$-exposed by $M_2$ over $M_b$ at $c$. Let $M:=DAGG_f(M_1, M_2)$. Then $t$ is $\leq$$\gamma\min(\alpha, \beta)$-exposed by $M$ over $M_b$ at $c$ for $\gamma$ as in theorem \ref{bound_theorem}.
\label{corollary1}
\end{corollary}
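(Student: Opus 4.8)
The plan is to chain Theorem~\ref{bound_theorem} with the token exposure multiplicity lemma (Lemma~\ref{exposure_multi}). By the remark following Lemma~\ref{tpr_rtp}, the hypotheses translate to $\frac{rp_{M_1}(t|c)}{rp_{M_b}(t|c)} = \alpha$ and $\frac{rp_{M_2}(t|c)}{rp_{M_b}(t|c)} = \beta$, and the goal is to show $\frac{rp_M(t|c)}{rp_{M_b}(t|c)} \le \gamma\min(\alpha,\beta)$. First I would invoke Theorem~\ref{bound_theorem}, which states that $t$ is $\le\gamma$-exposed by $M$ over both submodels; that is, $\frac{rp_M(t|c)}{rp_{M_1}(t|c)} \le \gamma$ and $\frac{rp_M(t|c)}{rp_{M_2}(t|c)} \le \gamma$.

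The core step is then a simple factorization of the relative-probability ratio that defines the exposure of $M$ over $M_b$. Writing $\frac{rp_M(t|c)}{rp_{M_b}(t|c)} = \frac{rp_M(t|c)}{rp_{M_1}(t|c)} \cdot \frac{rp_{M_1}(t|c)}{rp_{M_b}(t|c)}$ and substituting the two facts above yields $\frac{rp_M(t|c)}{rp_{M_b}(t|c)} \le \gamma\alpha$; routing the same factorization through $M_2$ instead gives $\frac{rp_M(t|c)}{rp_{M_b}(t|c)} \le \gamma\beta$. Since both inequalities constrain the single quantity $\frac{rp_M(t|c)}{rp_{M_b}(t|c)}$, taking the smaller of the two right-hand sides gives $\frac{rp_M(t|c)}{rp_{M_b}(t|c)} \le \gamma\min(\alpha,\beta)$, which is exactly the claim.

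I do not expect a genuine obstacle, since the corollary is essentially bookkeeping built on the two earlier results. The only point worth stating carefully is that Lemma~\ref{exposure_multi} is phrased as an equality for exact exposure values, whereas here the $M$-over-$M_i$ factor is only an upper bound. I would therefore either apply the factorization directly (as above), avoiding any appeal to the equality form, or note explicitly that multiplicity preserves the $\le$ relation: if $t$ is $\le a$-exposed by $M$ over $M_1$ and exactly $\alpha$-exposed by $M_1$ over $M_b$, then $t$ is $\le a\alpha$-exposed by $M$ over $M_b$. With that remark the argument closes in two lines.
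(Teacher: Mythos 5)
Your proposal is correct and follows exactly the route the paper takes: the paper's proof is the one-line remark that the corollary ``follows directly from lemma \ref{exposure_multi} and theorem \ref{bound_theorem},'' and your explicit factorization of $\frac{rp_M(t|c)}{rp_{M_b}(t|c)}$ through each submodel is precisely the intended combination of those two results. Your closing observation that the multiplicity argument must be checked to preserve the $\leq$ form (since all quantities are positive, it does) is a worthwhile clarification of a detail the paper leaves implicit.
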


\begin{proof}
Follows directly from lemma \ref{exposure_multi} and theorem \ref{bound_theorem}.
\end{proof}

Stating the corollary in other words, if we fix a context $c$, for any token $t$, the exposure of the aggregated model $M$ over any model $M_b$ is bounded by the minimum of the exposures of the submodels $M_1$ and $M_2$ over $M_b$, multiplied by a small value. This implies that if the exposure of a token $t$ by either submodel over $M_b$ is small (i.e., $t$ is not substantially exposed by at least one submodel), then the exposure by the aggregated model over $M_b$ cannot be too large (i.e., $t$ will not be substantially exposed by the aggregated model). Given that in \methodName, each submodel is trained on separate access levels, and assuming that access levels with shared secrets are assigned to the same partition, it is expected that each secret will not be substantially exposed by at least one of the submodels, and thus, \methodName will provide a defense against the exposure of these secrets.

\section{Evaluation}

\subsection{Datasets}
Since access-controlled datasets are not publicly available, we required datasets that mimic the access-control scenario. 
These datasets need to be divided into different topics (which serve as access levels), with many data records per topic. Additionally, to use two of our security evaluation metrics, the data records should contain phrases that we refer to as ``sensitive-mimicking phrases" – phrases unique to the topic that could be considered sensitive / secret.

\subsubsection{Movie Reviews}
The first dataset we utilized is the IMDB Spoiler reviews dataset~\cite{misra2019imdb}. 
We randomly selected 50 reviews of different movies released after 2015 and considered the movie of each review selected as an access level. 
Then, we collected all of the reviews for each of the 50 movies. We note that some reviews contain details about the movie's plot, cast members, or characters, which mimic sensitive information. We utilized the Movies Metadata dataset~\cite{Rounak2018movies} to retrieve cast members' names and used them as sensitive-mimicking phrases. 
The number of reviews totaled 22,742, with 10\% of each movie's reviews set aside for evaluation. The number of reviews per movie ranged between 160 and 751. 

\subsubsection{Recipes}
The second dataset used is the Food.com Recipes and Interactions dataset \cite{shuyang_li_2019}. We utilized class labels of the Food-101 dataset \cite{bossard14} to partition the recipes into multiple sets. Each set includes recipes with titles containing a specific class label (e.g., pizza). We selected the 10 most frequent classes as the access levels. We note that the recipes include specific details about the process of creating each dish, which can mimic, for example, sensitive detailed descriptions of product manufacturing processes. We use the ingredients of each recipe as sensitive-mimicking phrases. However, since some ingredients are common among many classes, we only consider ingredients that appear in recipes of a certain class with a frequency at least 10 times greater than the frequency in all of the recipes. The number of recipes totaled 10,829, with 10\% of each class put aside for evaluation. The number of recipes per class ranged between 408 and 2283.

\subsection{Training}
For training we used LORA \cite{hu2021lora}. LORA is a fine-tuning technique that uses a small number of trainable parameters. Training is relatively fast with LORA, and the resulting model requires minimal storage space. These qualities were crucial for our experiments, as we conducted numerous trials with limited computational resources. However, it is important to note that the theoretical analysis is not dependent on the training method, and we anticipate that the experiments can be replicated using other training techniques as well. The base model used was OpenAI-GPT \cite{radford2018improving} which has 117 million parameters and a vocabulary size of 40,478. This model's original training data is a books dataset from 2015 \cite{Zhu_2015_ICCV}. This limits the prior knowledge the model possesses regarding movies and recipes. Since recent LLMs are trained on more recent and diverse datasets, evaluating them on sensitive information from the movie and recipe datasets would be challenging, as the models are probably familiar with some of the information. We note that although recent LLMs are larger and perform better than OpenAI-GPT, many are still based on the same underlying principles. Our theoretical analysis and proposed approach generalizes to any language model based on next-token prediction and does not rely on the specifics of any particular architecture. Regarding training parameters, we conducted experiments with varying numbers of training epochs (1, 2, and 4).
The hyperparameters for LORA were set to default values and were not explored: r=64, lora\_alpha=32, lora\_dropout=0.05, optimizer=paged\_adamw\_32bit, learning rate=5e-4, and warmup\_ratio=0.03. All experiments were conducted on an NVIDIA A100-SXM4-40GB GPU.

\subsection{Compared Models}

\subsubsection{Non-secure models (NSec)} In these models, which serve as baselines, no attempt is made to secure sensitive information. \textbf{FT-ALL}: OpenAI-GPT fine-tuned on the entire training dataset.
\textbf{AGG-A}: Similar to DOMBA-INIT, but using arithmetic mean, a non-min-bounded function.

\subsubsection{Secure models (Sec)} While these models are trained on all the data with an effort made to secure sensitive information, they do not include an access-control mechanism.
\textbf{SUBMIX}: A DP aggregated model constructed using the method of \citeauthor{ginart2022submix} \shortcite{ginart2022submix}, with three submodels (two parts + the base model). For a meaningful comparison, we tuned the privacy parameter $\beta$ to 0.3, which resulted in utility comparable to DOMBA on the movies dataset.
\textbf{D-I-H}: DOMBA-INIT (without DOMBA-FT), using harmonic mean for aggregation.
\textbf{D-I-M}: DOMBA-INIT (without DOMBA-FT), using minimum for aggregation.

\subsubsection{Access-controlled models (AC)} These models are designed to secure sensitive information while providing an access-control mechanism.
\textbf{Per-AL}: A separate model for each access level, achieved by fine-tuning OpenAI-GPT only on data records of that access level.
\textbf{DOMBA}: Our full method, using the minimum function for aggregation.

\begin{table*}[t]
\centering
\begin{tabular}{|l|l||ll|ll||ll|ll|ll|ll|}
\hline
            &  & \multicolumn{4}{|c||}{Utility metrics} & \multicolumn{8}{|c|}{Security metrics} \\
\hline
           Type & Model & \multicolumn{2}{|l|}{HOPPL $\downarrow$} & \multicolumn{2}{|l||}{AUPPL $\downarrow$} & \multicolumn{2}{|l|}{EXP $\downarrow$} & \multicolumn{2}{|l|}{SPPL $\uparrow$}  & \multicolumn{2}{|l|}{SIA $\downarrow$}  & \multicolumn{2}{|l|}{CAN $\downarrow$}  \\
           & & R     & M     & R     & M      & R     & M        & R      & M          & R    & M     & R     & M     \\
\hline
NSec  &   FT-ALL    & 19.9  & 48.39 & 15.61 & 41.55  & -      & -          & 15.31 & 61.81    & 0.81 & 0.83  & 14.47 & 28.03 \\
&  AGG-A      & 22.34 & 49.37 & 17.83 & 43.55  &  207.9 & 1699 & 19.78 & 78.93  & 0.82 & 0.82  & 14.82 & 23.88 \\
\hline
Sec  & SUBMIX   & 29.58 & \underline{50.84} & 25.4  & 48.54  &  5.1  & 17.36   & 49.95 & 550   & 0.76 & 0.8   & 4.73  & 4.01  \\
&  D-I-H    & \textbf{23.57} & \textbf{50.83} & \underline{20.09} & \underline{48.37}  &  2.38    & 2.84  &  50.87 & 895.1   & 0.64 & 0.66  & 3.49  & \underline{2.43}  \\
&  D-I-M    & \underline{24.54} & 51.89 & 21.14 & 49.79 & \textbf{1.77} & \textbf{2.31} & \underline{61.99} & \textbf{1161}  &  \underline{0.6}  & \underline{0.64}  & \underline{3.21}  & \textbf{2.21}  \\
\hline
AC  & PER-AL    & 45.15 & 63.39 & 27.87 & 54.18  & -     & -        & -      & -          & -    & -     & -     & -    \\
&  DOMBA & 25.19 & 52.22 & \textbf{16.85} & \textbf{42.48} & \underline{1.78}   & \underline{2.37}   & \textbf{74}    & \underline{1127}     & \textbf{0.54} & \textbf{0.62}  & \textbf{2.88}     & 2.44     \\
\hline

\end{tabular}
\caption{Results with two epochs of training. For each metric and model, the results for both the recipe dataset (R) and movie reviews dataset (M) are presented. The best values for secure and access-controlled models are in bold, and the second best values are underlined. (Note that PER-AL is trivially secure and therefore, no results are presented for its security; the EXP metric is only meaningful for methods which aggregate two submodels and therefore, it is not presented for FT-ALL.)}
\label{res2ep}
\end{table*}

\subsection{Metrics}
In this section, we describe the metrics used to evaluate the models' utility and security. For utility we use perplexity, which measures the model's ability to predict the next token in a text. For security we use four different metrics: exposure, secret perplexity, a secret inference attack AUC-ROC, and the canary technique score \cite{236216}. We note that for access-controlled models, we evaluate the security of each variant (corresponding to an access level) using data with a different access level than the one that the variant was trained for.

\subsubsection{Utility Evaluation}
We evaluate utility in terms of perplexity on two evaluation sets as follows: 1. \textbf{HOPPL}: perplexity on held out data with access levels that were not used for training. This metric provides a ``fair" way of comparing secure and non-secure models, as the non-secure models are not expected to gain by ``knowing" restricted information. 2. \textbf{AUPPL}: perplexity on held out data of the access levels used for training (for access-controlled models - the corresponding variant is used for each access level).  The main purpose of this metric is to compare the utility of secure and access-controlled models. We expect the access-controlled models to gain utility by ``knowing" authorized restricted information. For both metrics above, we calculate the perplexity as: $perp_M(D_e) = \exp(\frac{1}{|D_e|}\sum_{r \in D_e}\sum_i{-\log(p_M(r_i|r_{<i}))}$, where $|D_e|$ is the amount of tokens in $D_e$, $r$ is a data record, $r_i$ is the i'th token in the record, $r_{<i}$ are the tokens preceding it, and $p_M$ is the probability assigned by the model.

\subsubsection{Exposure (EXP)}
In our theoretical analysis (Theorem \ref{bound_theorem}), we established that the exposure of $M=DAGG_f(M_1,M_2)$ over both $M_1$ and $M_2$ is bounded for any token by $\lambda_f \overline{f_c}(M_1, M_2) \leq \lambda_f \sqrt{MAE_c(M_1, M_2)}$. To validate this, we measure ``extreme case" exposure of $M$ over $M_1$ and $M_2$. We report the maximum and 99th percentile exposure ($ = \frac{rp_M(t|c)}{\min(rp_{M_1}(t|c), rp_{M_2}(t|c))}$) for all tokens observed in the data, given the previous tokens as context.

\subsubsection{Secret Perplexity (SPPL)}
One way of measuring the model's ability to handle sensitive information is by evaluating perplexity specifically on sensitive-mimicking phrases. Given a model M, we measure the perplexity of each instance of a sensitive-mimicking phrase in the evaluation dataset. Specifically, let $x:=x_1,...,x_k$ be the token representations of a sensitive-mimicking phrase and c be the tokens preceding this phrase, we measure $perp_M(x|c) = \exp(\frac{1}{k}\sum_i{-\log(p_M(x_i|c,x_1,...,x_{i-1}))})$. We report the average of the mean perplexity of each access level.
This metric aims to provide a basic, rough evaluation of a model's ability to handle sensitive information.

\subsubsection{Secret Inference Attack (SIA)}
This attack is based on a membership inference attack with a reference model \cite{mireshghallah-etal-2022-quantifying, pmlr-v130-kumar-murakonda21a}. The original attack works as follows: Given a reference model $M_b$, a target model $M$, and a potential training data record $r$ of $M$, measure the log ratio of the probabilities of $r$ according to $M$ and $M_b$, that is $\log(\frac{p_M(r)}{p_{M_b}(r)})$. If this value is above a certain threshold, consider $r$ as belonging to the training data of $M$. In our scenario, instead of inferring the membership of any data record, the adversary tries to infer secrets. Therefore, we only consider probabilities assigned to sensitive-mimicking phrases: cast members' names for the movie reviews dataset and secret ingredients for the recipe dataset. The attack dataset consists of tuples $(c, t, label)$, where $c$ is a context, $t$ is a phrase, and $label$ is $true$ if $t$ is sensitive and $false$ otherwise. To obtain data points labeled $false$, we replace each sensitive-mimicking phrase $t$ by $t'$, which is another phrase of the same type (cast member name or ingredient) that is not a sensitive-mimicking phrase. For every data point $(c, t, true)$, we have a data point $(c, t', false)$. We report the AUC-ROC of the attack.

\subsubsection{The Canary Technique (CAN)}
We adapt the attack proposed by \citeauthor{236216} \shortcite{236216} to the access-control scenario. For each access level, we insert 30 repetitions of a phrase (canary) consisting of seven randomly chosen words into the training set for that access level (the number of repetitions and phrase length were selected arbitrarily). This canary mimics sensitive information for the access level. We report the median attack score across access levels. An attack score of $s$ means that only $(\frac{1}{2})^{s}$ of phrases of the same length have a higher probability of being generated by the model. A score near one suggests that the model did not memorize the canary.

\section{Results}
The results with two epochs of training are shown in Table \ref{res2ep}. As expected, FT-ALL achieved the best utility across both metrics. Among secure and access-controlled models, D-I-H had the highest HOPPL utility, while DOMBA excelled on the AUPPL metric, demonstrating the value of the DOMBA-FT step. Comparing access-controlled models, DOMBA substantially outperformed PER-AL across both datasets and both utility metrics.

Regarding security, non-secure models performed substantially worse, compared to secure models, on all metrics. Among the secure and access-controlled models, SUBMIX obtained the worst values for all metrics and datasets, D-I-M and DOMBA obtained the best values, and D-I-H was slightly worse. Although secure models provide substantially better security compared to non-secure models, they are not perfect. For instance, a perfectly secure model would score 0.5 on SIA and one on CAN. This does not imply that secure models are not actually secure. For example, the values obtained by all secure and access-controlled models for the canary technique metric are considered impractical for extracting useful information \cite{236216}. 

Regarding the impact of the choice of min-bounded function, D-I-M achieved better security and worse utility compared to D-I-H, as expected. We also tested replacing the min-bounded function used in DOMBA from minimum to harmonic mean and the effect was similar.

\begin{figure}[t]
\centering
\includegraphics[width=0.46\textwidth]{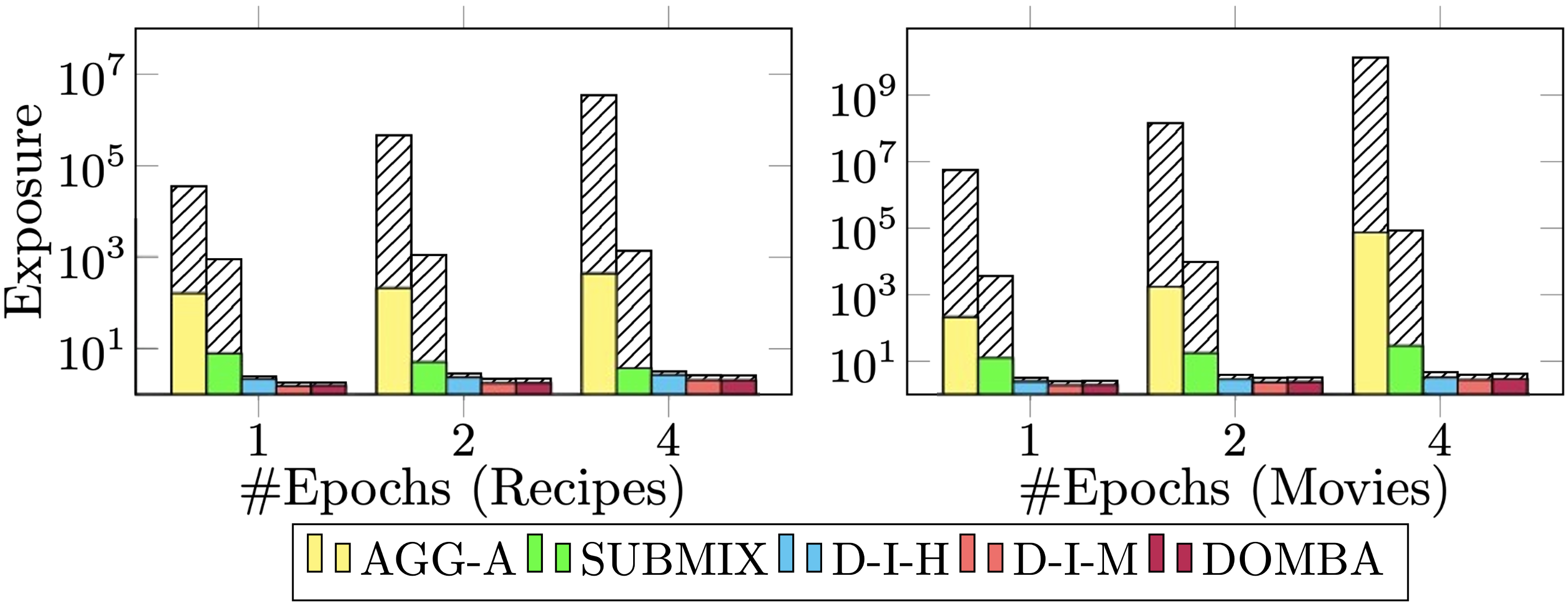}
\caption{Exposure (log scale, lower is better) of different models with 1, 2, or 4 training epochs. The colored bars represent the 99th percentile exposure, while the dashed bars represent the maximal observed exposure.}
\label{all_exp}
\end{figure}

Figure \ref{all_exp} shows the worst-case and 99th percentile exposure of models employing different aggregation methods on the recipe and movie reviews datasets for 1, 2, and 4 training epochs. The maximum exposure of DOMBA, D-I-H, and D-I-M is 4.69. In comparison, SUBMIX reaches a maximum exposure of 8.5e4 and AGG-A reaches a maximum exposure of 1.3e10. We observe that \methodName's 99th percentile exposure is similar to its maximum exposure, supporting the theoretical bound established by our analysis (Theorem \ref{bound_theorem}). Regarding the effect of the number of epochs, increasing it generally leads to higher exposure. However, the increase in exposure for DOMBA, D-I-H, and D-I-M is moderate compared to AGG-A for both datasets, while for SUMBIX, the change in exposure is inconsistent between the two datasets.

Figure \ref{tradeoff} illustrates the trade-off between utility and security for different methods across both datasets. For most models, as the number of training epochs increases, security tends to worsen while utility improves. However, non-secure models experience a much greater decline in security. DOMBA achieves the best trade-off, providing superior security while maintaining utility levels similar to those of the non-secure models.

\begin{figure}[t]
\centering
\includegraphics[width=0.46\textwidth]{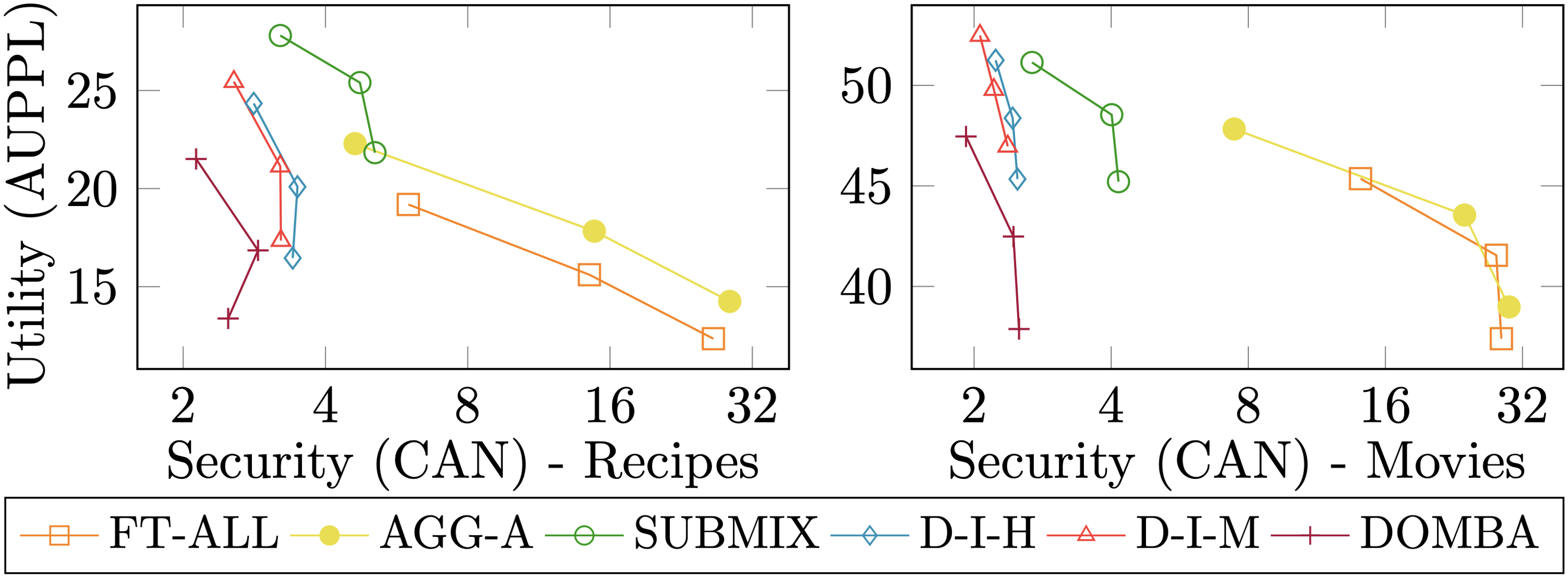}
\caption{Utility-security trade-off (for both metrics lower is better, with the security metric on a log scale). Each line represents a model trained for different numbers of epochs (1, 2, and 4). The point representing 1 epoch is always the leftmost and highest point on the line.}
\label{tradeoff}
\end{figure}

\section{Discussion}

To analyze DOMBA, we developed the token exposure framework, offering an alternative to DP for theoretically analyzing the security of language models. The key idea behind token exposure is to evaluate the difference between a safe model and a potentially unsafe one. In contrast, DP evaluates the differences between models trained on nearly identical datasets (in both approaches, a smaller difference indicates stronger security). Additionally, token exposure utilizes relative probabilities over absolute ones. We find the token exposure approach more intuitive and practical for analyzing DOMBA. Future work could further explore the token exposure framework and investigate its relationships and potential synergies with DP.


\methodName relies on a strict separation of access levels into two distinct partitions that do not share sensitive information. Achieving this separation requires the graph of shared secrets between access levels to be disconnected. We hypothesize that this condition is usually met, as organizations often handle documents across diverse topics. Future research could validate this hypothesis and investigate approaches for scenarios where the graph remains connected.

While the training time of DOMBA scales linearly with the dataset size, similar to training a single LLM, and storage requirements can be kept low by using LORA, inference incurs additional resource overhead due to the deployment of two LLMs instead of one. This overhead may render the approach impractical for certain applications. One potential solution is to employ \methodName as a teacher model to train a student model via knowledge distillation \cite{xu2024survey, PATE}, where the student model serves as a deployed model mimicking DOMBA. 

\section{Conclusion}
In this paper we proposed \methodName, a novel approach for training and deploying access-controlled LLMs with high utility. We formalized the concept of exposed secrets by developing the token exposure framework and bounded \methodName's exposure. We evaluated \methodName's performance on two access-controlled datasets, mimicking real world organizations' needs. Our evaluation showed that \methodName achieves a better security-utility trade-off than existing methods, across both datasets, two utility metrics and four security metrics. Finally, we believe that the principles of min-bounded aggregation and relative probabilities, which serve as \methodName's core, have substantial potential to serve as foundational elements in a wide range of future machine learning research, extending beyond the scope of security.

\section*{Acknowledgments}
The authors would like to thank Dr. Avi Segal for their valuable insights and thoughtful feedback, which contributed to the refinement of this manuscript.

\bibliography{aaai25}

\end{document}